\newtheorem{theorem}{Theorem}
\newtheorem{proposition}[theorem]{Proposition}
\newcommand{\rset}{\mathbf{R}}
\newcommand{\I}{\mathcal{I}}
\newcommand{\J}{\mathcal{J}}
\newcommand{\A}{\mathcal{A}}
\newcommand{\N}{\mathcal{N}}
\providecommand{\norm}[1]{\lVert#1\rVert}
\newcommand{\be}{\begin{equation}}
\newcommand{\ee}{\end{equation}}
\newcommand{\bt}{\begin{tabular}}
\newcommand{\et}{\end{tabular}}
\newcommand\pdffig[4][7cm]{
	\begin{figure}[t]
		\centering
		\includegraphics[width=#1]{#2}
		\caption{#3}
		\label{#4}
	\end{figure}
}
\newlength{\algitab}
\title{Dictionary Learning with Uniform Sparse Representations for Anomaly Detection}
\name{Paul Irofti, Cristian Rusu, Andrei Pătrașcu
\thanks{
Paul Irofti was supported by a grant of the Romanian Ministry of Education and Research, CNCS - UEFISCDI,
project number PN-III-P1-1.1-PD-2019-0825, within PNCDI III. Andrei P\u atra\c scu was supported by a grant of the Romanian Ministry of Education and Research, CNCS - UEFISCDI, project number PN-III-P1-1.1-PD-2019-1123, within PNCDI III.
Paul Irofti and Andrei Pătrașcu were also supported by a grant of the Romanian Ministry of Education and Research, CNCS - UEFISCDI,
project number PN-III-P2-2.1-PED-2019-3248,
within PNCDI III. Cristian Rusu was supported by
		the Romanian Ministry of Education and Research, CNCS-UEFISCDI,
		project number PN-III-P1-1.1-TE-2019-1843, within PNCDI III.
}
}
\address{Research Center for Logic, Optimization and Security (LOS),
Department of Computer Science, \\
Faculty of Mathematics and Computer Science,
University of Bucharest, Romania}
\begin{document}
\maketitle
\begin{abstract}
Many applications like audio and image processing show that sparse representations are a powerful and efficient signal modeling technique. Finding an optimal dictionary that generates at the same time the sparsest representations of data and the smallest approximation error is a hard problem approached by dictionary learning (DL). We study how DL performs in detecting abnormal samples in a dataset of signals. In this paper we use a particular DL formulation that seeks uniform sparse representations model to detect the underlying subspace of the majority of samples in a dataset, using a K-SVD-type algorithm. Numerical simulations show that one can efficiently use this resulted subspace to discriminate the anomalies over the regular data points. 
\end{abstract}

\begin{keywords}
anomaly detection, dictionary learning, sparse representation
\end{keywords}
\section{Introduction}

Dictionary learning (DL) is a decomposition method with many applications to audio and image processing, compression, classification, and computer vision, where it gives better performance than popular transforms. Given the training data, DL builds a dictionary and sparse representations corresponding to data points by minimization of the approximation error, imposing the desired limits on coefficients sparsity.

Intuitively, the generic anomaly detection~(AD) problem consists of finding particular points in a given dataset, called anomalies or outliers, that are not conformal to the majority of the rest of the data points (called inliers).

DL algorithms construct representation vectors that have an unstructured support distribution, i.e., the sparsity pattern is unstructured and uses many of the theoretical possible subspaces to represent the data. When done this way, there is no indication of the existence of a common subspace that generates all the training data. But, we now assume that the regular majority of signals is generated by the same atoms.



\vspace{5pt}

\noindent \textbf{Prior work}. The idea of enforcing uniform support representations is not new, for instance in \cite{NieHua:10,Lan:18} it is used in feature selection problems in combination with squared Euclidean loss and other various robust losses. The work in \cite{TROPP2006572} introduces the Simultaneous Orthogonal Matching Pursuit (S-OMP) method to solve the sparse approximation step while also balancing the number of elementary signals that are used in the representations.
The Joint Sparse Representation (JSR) model analyzed in \cite{ZhaLi:13,LiZha:15} 
assumes a multi-class partitioning of the input, where each class spans a low-dimensional subspace. 
In \cite{AdlEla:15}, $\ell_1$ penalty-based JSR is used for detecting noisy anomalies with prefixed dictionary. Although their formulations are similar to ours, the authors aim to compute only specific Sparse Representations that highlight diversity in the dataset.


Some references on DL-based AD in images are listed further,
however note that these do not tackle the standard DL problem and thus do not generalize well.
In \cite{7026119}, local dictionaries with specific structures are enhanced based on information from neighbors for detecting abnormal images. The convolutional sparse coding model is exploited in  \cite{7280790} in order to learn a dictionary of filters used for the same task. 
Particular DL and SR formulations for AD in network traffic and telemetry are analyzed in \cite{10.1007/978-3-030-48256-5_34,Xing2020DetectingAI,pilastre:hal-02466360}. Empirical evidence on detection of anomalous images and electrocardiographic data are given in \cite{Andrysiak2018SparseRA,7008985,YuaDan:19,Xing2020DetectingAI,8237307}. 









\vspace{5pt}
\noindent \textbf{Contribution}.
In this paper, we first reformulate the DL problem and add a row sparsity regularization, by replacing the usual column sparsity penalty. We approach two particular penalties, $\ell_1$ norm and $\ell_0$-"norm", aiming to enforce an entire row of representations matrix $X$ to be null
while at the same time allowing subspace differentiation within the remaining rows.
Prior work uses regularization and other techniques to impose row sparsity in $X$ such that the final representations lie on the same subspace;
our work also imposes row sparsity but gains a competitive edge by allowing subspace differentiation within the selected rows.
By taking advantage of the sum decomposition of the new proposed regularizer, we devise a K-SVD-type algorithm with similar complexity as the usual K-SVD iteration \cite{DumIro:17,AhaEla:06,Iro:20}.
We show-case the performance of our approach in Section \ref{sec:numexp}.


\vspace{5pt}

\noindent \textbf{Notations}. Denote: $x_i$ the $i$-th row, $x^j$ the $j$-th column in matrix $X$.
The ordered $i$-th left and right singular vectors of matrix $X$ are $u^i(X)$ and $v_i(X)$, respectively. The "norm" $\norm{\cdot}_{0}$ counts the number of nonzero elements of a vector.
We use $[n] = \{1, \cdots, n\}$ for some $n \ge 1$. The set of column normalized matrices is $\N_{m,n} = \{D \in \rset^{m \times n}\!:\! \norm{d^j}  =1, \forall j \in [n] \}$.


\section{Problem formulation}

Let $Y$ be the input data, the basic DL problem is:
\begin{align*}
    \min_{X,D \in \N_{m,n}} \; \norm{DX - Y}^2_F +\lambda \sum_{i = 1}^N \norm{x_i}_{0} 
\end{align*}
where $d^j$ is the $j$-th atom of the dictionary $D$ and $X$ is the representation matrix.
The above $\ell_0$ regularization promotes unstructured sparsity in the columns of $X$.
Unfortunately this
does not reveal any underlying joint properties of signals $Y$ (Fig.\ref{fig:matrices} left).
Existing methods replace this standard regularization
in order to promote a similar support among the columns of $X$
(Fig.\ref{fig:matrices} center).
Therefore, to benefit from both,
our aim is to preserve the sparsity pattern from both coordinates
(Fig.\ref{fig:matrices} right).
The general model of interest is:
\begin{align}\label{problem_of_interest}
    \min_{X,D \in \N_{m,n}}   \; \frac{1}{2}\norm{DX - Y}^2_F + \lambda \sum\limits_{i=1}^n \phi(\norm{x_i}_{2}). 
\end{align}
where $\phi$ is a sparse regularizer. Although a similar intuition is shared by JSR, which aims to obtain sparsity pattern as in the center of Fig. \ref{fig:matrices}, we argue later that our algorithm could preserve the sparsity on both coordinates, illustrated in Fig. \ref{fig:matrices} (right), by nature of K-SVD iteration.
For simplicity we further use notation $F(D,X):=\frac{1}{2}\norm{DX - Y}^2_F + \lambda \sum\limits_{i=1}^n \phi(\norm{x_i}_{2})$. Simultaneous minimization over subsets of $\{x_i\}_{i=0}^N$ that spans multiples rows makes the regularization hard even for convex $\phi$. Our further approach involves alternating minimization over one row at each iteration in order to obtain an algorithmic scheme with simple steps.

\section{Algorithms and Methodology}

The K-SVD algorithm introduced in \cite{AhaEla:06} selects at each iteration $k$ an index $i_k$ and, based on the information at previous step $k-1$, minimizes the residual over $d^{i_k}$ and $x_{i_k}$, while it keeps unchanged $d^j = (d^j)^k$ and $x_j = (x_j)^k$ for $j \neq i_k$.
The adaptation of this K-SVD reasoning to our regularized model
leads to the following: at iteration $k$, choose $i \in [N]$
\begin{align}\label{general_KSVD}
\!\!    (d^{i,k+1}\!,x_i^{k+1}) \!=\! \arg \!\!\! \min\limits_{\norm{d^{i}}=1,x_i} \! \frac{1}{2}\norm{d^{i}x_i \!-\! R^k}^2 \!+\! \lambda \phi(\norm{x_i}_2)
\end{align}
where $\{d^{j,k}, x_j^k \}$ are the $j-$th atom of the dictionary $D^k$ and $j-$th row of $X^k$, respectively. Also we denote $R^k = Y - \sum_{j \neq i_k} d^{j,k} x_{j}^k$. If $\lambda = 0$, then $(d^{i,k+1},x_i^{k+1})$
are the maximal left $u^1(R^k)$ and right $v_1(R^k)$ singular vectors. This iteration guarantees a decrease in the objective function $F$.

\begin{proposition} Let $\{D^k,X^k\}_{k \ge 0}$ be the sequence generated by the Algorithm 1. Then the following decrease hold:
$F(D^{k+1},X^{k+1}) \le F(D^{k},X^{k}) \qquad \forall k \ge 0.$
\end{proposition}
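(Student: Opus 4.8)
The plan is to exploit the additive, rank-one structure that the K-SVD update isolates, so that the whole objective $F$ splits into one part that is exactly the subproblem \eqref{general_KSVD} and one part that is frozen during the update. First I would fix the index $i = i_k$ selected at iteration $k$ and recall that, since $DX = \sum_{j=1}^{n} d^j x_j$, separating the $i$-th rank-one term and using that every other atom-row pair is held fixed gives
\begin{align*}
    D^{k+1}X^{k+1} - Y = d^{i,k+1} x_i^{k+1} - R^k,
\end{align*}
because $R^k = Y - \sum_{j \neq i_k} d^{j,k} x_j^k$ absorbs all the unchanged terms. The same identity with the $k+1$ iterates replaced by the $k$ iterates shows that the quadratic part of $F$, evaluated at either point, coincides with the quadratic term appearing in \eqref{general_KSVD}.

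Second, I would split the regularizer as $\lambda \sum_{j=1}^{n} \phi(\norm{x_j}_2) = \lambda \phi(\norm{x_i}_2) + \lambda \sum_{j \neq i} \phi(\norm{x_j}_2)$ and note that the second sum does not involve $(d^i, x_i)$, hence it is left untouched by the update. Combining this with the previous step yields the decomposition
\begin{align*}
    F(D, X) = \left[\frac{1}{2}\norm{d^i x_i - R^k}^2 + \lambda \phi(\norm{x_i}_2)\right] + c_k,
\end{align*}
where $c_k = \lambda \sum_{j \neq i} \phi(\norm{x_j}_2)$ is a constant shared by $F(D^{k+1}, X^{k+1})$ and $F(D^{k}, X^{k})$.

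Third, I would invoke optimality of the subproblem: by construction $(d^{i,k+1}, x_i^{k+1})$ minimizes the bracketed expression over $\norm{d^i} = 1$ and $x_i$. The previous iterate $(d^{i,k}, x_i^k)$ is feasible for this subproblem — in particular $d^{i,k}$ is column-normalized, so $\norm{d^{i,k}} = 1$ — hence its bracketed value is no smaller than that attained by the minimizer. Adding back the common constant $c_k$ on both sides then delivers $F(D^{k+1}, X^{k+1}) \le F(D^{k}, X^{k})$, which is the claimed decrease.

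There is no deep obstacle here; this is a standard block-coordinate-descent monotonicity argument, and the only delicate points are bookkeeping. I would take care to verify that $R^k$ correctly collects every frozen atom-row product so that the quadratic part truly collapses to a single rank-one fit, and to confirm that the old atom is admissible for the normalization constraint so that the old iterate genuinely competes in the subproblem. I would also emphasize that the update need not be a global minimizer of the (generally nonconvex) subproblem for the inequality to hold: any point whose subproblem value does not exceed that of $(d^{i,k}, x_i^k)$ suffices, and an $\arg\min$ certainly qualifies.
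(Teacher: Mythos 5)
Your proposal is correct and follows essentially the same route as the paper's own proof: both decompose $F$ into the rank-one subproblem objective $\frac{1}{2}\norm{d^i x_i - R^k}^2 + \lambda\phi(\norm{x_i}_2)$ plus the frozen term $\lambda\sum_{j\neq i}\phi(\norm{x_j}_2)$, and conclude by the optimality of $(d^{i,k+1},x_i^{k+1})$ in \eqref{general_KSVD} against the feasible previous iterate. Your explicit check that $d^{i,k}$ satisfies the normalization constraint, and the remark that only descent (not global optimality) is needed, are sound bookkeeping details the paper leaves implicit, but they do not change the argument.
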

\begin{proof}
The iteration \eqref{general_KSVD} claims that $(d^{i,k+1},x_i^{k+1})$ is the minimizer of the right-hand side objective. Therefore, the value of this objective in $(d^{i,k+1},x_i^{k+1})$ is lower than its evaluation in the previous iterate. Thus, by using this fact we have:
\begin{align*}
& F(D^{k+1},X^{k+1})  = \frac{1}{2}\norm{d^{i,k+1}x_i^{k+1} - R^k}^2 \\
& \qquad \quad + \lambda \phi(\norm{x_i^{k+1}}_2) + \lambda \sum\limits_{j\neq i}\phi(\norm{x_j^{k}}_2) \\
& \le \frac{1}{2}\norm{d^{i,k}x_i^{k} - R^k}^2 + \lambda \sum\limits_{i}\phi(\norm{x_i^{k}}_2) = F(D^{k},X^{k}).
\end{align*}

\vspace{-0.5cm}
\end{proof}

\vspace{-10pt}

\noindent We further show that the above algorithm allows explicit forms of the solution
at each iteration  \eqref{general_KSVD} for some important cases when $\phi$ identifies with the most used sparse penalties.

\subsection{Convex $\ell_{2,1}$ regularization}	

Let $\phi(z) = z$, then in this particular case the regularizer of \eqref{problem_of_interest} becomes the widely known sparse penalty $\norm{X}_{2,1}$. 

\begin{proposition}\label{prop:l1}
Let $ \phi(x) = x $ and $\sigma_1$ be the maximal singular value of $R^k$, then the closed form solution of K-SVD iteration \eqref{general_KSVD} is:
if  $ \sigma_1 \ge \lambda$ then
\begin{align}\label{opt_l1}
	    (d^{i,k+1},x_i^{k+1})= \left(u^1(R^k), (\sigma_1 - \lambda) v_1(R^k)\right), 
\end{align}
otherwise $(d^{i,k+1},x_i^{k+1}) = (d^{i,k}, 0)$.
\end{proposition}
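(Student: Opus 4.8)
The plan is to solve the joint minimization in \eqref{general_KSVD} by partial minimization: first minimize over the row $x_i$ for a fixed unit atom $d^i$, then minimize the resulting value over $d^i$ on the unit sphere. Writing $R = R^k$ and treating $d := d^i \in \rset^m$, $x := x_i$ as a vector, I would expand $\frac12\norm{dx - R}^2 = \frac12\norm{dx}_F^2 - \iprod{dx, R} + \frac12\norm{R}_F^2$. Since $\norm{d} = 1$, one has $\norm{dx}_F^2 = \norm{x}^2$ and $\iprod{dx, R} = \iprod{x, R^\top d}$, so, setting $r := R^\top d$, the objective as a function of $x$ reduces to $\frac12\norm{x - r}^2 + \lambda\norm{x} + \text{const}$. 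Hence the inner minimizer is exactly $\prox_{\lambda\norm{\cdot}_2}(r)$, the block soft-thresholding map $x^\star(d) = (1 - \lambda/\norm{r})_+\, r$, which vanishes whenever $\norm{r} \le \lambda$.

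The second step is to substitute $x^\star(d)$ back and observe that the reduced objective depends on $d$ only through the scalar $\norm{r} = \norm{R^\top d}$. A direct computation (the routine part) gives the value $\frac12\norm{R}_F^2 - \frac12\big((\norm{R^\top d} - \lambda)_+\big)^2$. Minimizing this over $\norm{d} = 1$ is therefore equivalent to maximizing $\norm{R^\top d}$ over the unit sphere, since both $t \mapsto t_+$ and $t \mapsto t^2$ are nondecreasing on the relevant range.

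Finally I would invoke the variational characterization of the largest singular value, $\max_{\norm{d}=1}\norm{R^\top d} = \sigma_1$, attained at $d = u^1(R)$, together with the identity $R^\top u^1(R) = \sigma_1 v_1(R)$. Plugging $d = u^1(R)$ into the block soft-thresholding formula yields $r = \sigma_1 v_1(R)$ and $x^\star = (\sigma_1 - \lambda)_+ v_1(R)$, which reproduces \eqref{opt_l1} in the regime $\sigma_1 \ge \lambda$. When $\sigma_1 < \lambda$ the reduced objective is constant (indeed $x^\star(d) = 0$ for every feasible $d$), so the atom is left unchanged at $d^{i,k}$ and the row is set to zero, matching the stated fallback. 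I expect the main obstacle to be the nonconvex bilinear coupling between $d$ and $x$ under the unit-norm constraint: the argument only goes through because partial minimization decouples the problem cleanly, and some care is needed at the nonsmooth point $r = 0$ and at the boundary $\sigma_1 = \lambda$ to ensure the closed forms and the case split remain consistent.
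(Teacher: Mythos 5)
Your proof is correct, but it takes a genuinely different route from the paper's. The paper expands $d$ and $x$ in the left and right singular bases of $R^k$, reduces the iteration to a coefficient problem in $(\rho,\theta)$ and the magnitude $t=\norm{x}$, invokes Cauchy--Schwarz to conclude that aligning with the top singular pair, $(\theta^*,\rho^*)=(e_1,e_1)$, is optimal for every $t$, and then solves the scalar problem $\min_{t\ge 0}\ \lambda t + \frac12 t^2 - \sigma_1 t$, obtaining $t^*=\max\{0,\sigma_1-\lambda\}$. You instead perform exact partial minimization: for fixed unit $d$ the inner problem is $\min_x \frac12\norm{x-R^\top d}^2+\lambda\norm{x}+\text{const}$, solved in closed form by the block soft-thresholding operator $\prox_{\lambda\norm{\cdot}_2}(R^\top d)$, and the resulting value function $\frac12\norm{R}_F^2-\frac12\bigl((\norm{R^\top d}-\lambda)_+\bigr)^2$ depends on $d$ only through $\norm{R^\top d}$, which is maximized over the sphere at $d=u^1(R^k)$ with value $\sigma_1$. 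Your route buys modularity and rigor: it rests on two standard facts (the $\ell_2$-prox identity and $\max_{\norm{d}=1}\norm{R^\top d}=\sigma_1$), it handles the $\sigma_1<\lambda$ branch transparently (the value function is then constant, so $x=0$ is optimal for every feasible $d$, justifying the convention of keeping $d^{i,k}$), and it extends verbatim to any penalty with a computable prox --- it would recover Proposition~\ref{prop:l3} with hard thresholding in place of soft. It also sidesteps a blemish in the paper's argument: the printed Cauchy--Schwarz display is garbled (it should read $\bigl(\sum_i \sigma_i\rho_i\theta_i\bigr)^2 \le \bigl(\sum_i \sigma_i^2\rho_i^2\bigr)\bigl(\sum_i \theta_i^2\bigr) \le \sigma_1^2 t^2$, using $\norm{\rho}=1$ from $\norm{d}=1$), whereas your computation requires no coordinate expansion at all. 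What the paper's approach buys in exchange is an explicit geometric picture: it shows directly that the optimal direction pair aligns with the top singular vectors independently of the magnitude $t$, cleanly decoupling direction from length, which is the mechanism your value-function computation packages implicitly.
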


\begin{proof}
For simplicity, we redenote $d:= d^{i, k+1}, x:= x_i^{k+1}$ and $ t := \norm{x_i^{k+1}}$. The singular values of $R^k$ are called $\sigma_i$. We represent $d$ and $x$ in the SVD basis of $R^k$:
$d = \sum_{i=1}^m \rho_i u^i(R^k)$ and $  
x = \sum_{i=1}^N \theta_i v_i(R^k)$ 
to expand the objective in the new form:
\begin{equation*}
F(D^{k+1},X^{k+1})=\frac12 \norm{R^k}_F^2 - x^T(R^k)^Td + \frac12 \norm{x}^2 + \lambda \norm{x}.
\end{equation*}
Note that we can write $(R^k)^Td = \sum_{i=1}^r \sigma_i v_i(R^k) u^i(R^k)^T d = \sum_{i=1}^r \sigma_i \rho_i v_i(R^k)$.
Then the minimization of $F$ becomes:
\begin{equation}\label{problem_of_interest_transformed}
 \min_{t\ge 0, \theta,\rho} \;    \lambda t + \frac12 t^2 - \sum_{i=1}^r \sigma_i\rho_i\theta_i + \norm{R^k}_F^2.
\end{equation}
Further, observe that by the Cauchy-Schwarz inequality:
$\left(\sum_{i=1}^r \sigma_i\rho_i\theta_i\right)^2 \le \left(\sum_{i=1}^r \sigma_i\rho_i\right)^2 \left(\sum_{i=1}^N \theta_i\right)^2 \le
    \sigma_1^2 t^2$
yields that $(\theta^*,\rho^*) = (e_1,e_1)$ are optimal for any $t$ in problem \eqref{problem_of_interest_transformed}. Finally, the final form of \eqref{problem_of_interest_transformed} remains:
$\min_{t \ge 0} \;      \lambda t + \frac12 t^2 - \sigma_1(R^k) t + \norm{R^k}_F^2$
which has solution $t^* = \max\{0,\sigma_1 - \lambda \}$. Observe that for $\lambda > \sigma_1$, the optimal row $x_i^{k+1}$ is null.
\end{proof}


\subsection{Nonconvex  regularizers} 

\noindent There is wide evidence that nonconvex regularizers guarantees in some cases better performance, than convex ones, on unstructured sparse optimization problems \cite{ZhaLi:13}. We elaborate the explicit form of iteration \eqref{general_KSVD} when $\phi$ is the $\ell_0$ regularization, i.e. $\phi(x) = \norm{x}_0$. 
At iteration $k$, the index $i \in [n]$ is chosen and the following subproblem is solved:
\begin{align*}
    & (d^{i,k+1},x_i^{k+1}) = \arg\min\limits_{\norm{d^{i}} = 1,x_i} \; \frac{1}{2}\norm{d^{i}x_i - R^k}^2 + \lambda \norm{\norm{x_i}_2}_0
\end{align*}
\begin{proposition}\label{prop:l3}
Let $ \phi(x) =\norm{x}_0 $, then the closed form solution of K-SVD iteration \eqref{general_KSVD} is:
\begin{align}\label{opt_l0}
	    (d^{i,k+1},x_i^{k+1})= (u^1(R^k), v_1(R^k)), 
\end{align}
assuming $\frac{1}{2}\norm{u^1(R^k)v_1(R^k) - R^k}^2 \le \frac{1}{2}\norm{R^k}^2-\lambda$. Otherwise $(d^{i,k+1},x_i^{k+1}) = (d^{i,k}, 0)$.
\end{proposition}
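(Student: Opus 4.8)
The plan is to exploit the discrete nature of the penalty rather than any smoothness. The first observation is that, since $\norm{x_i}_2$ is a nonnegative scalar, the term $\phi(\norm{x_i}_2)=\norm{\,\norm{x_i}_2\,}_0$ is simply the indicator of the event $x_i\neq 0$: it equals $0$ when the row is null and $1$ otherwise. Hence the regularizer contributes a fixed cost $\lambda$ whenever the atom is activated, and the subproblem \eqref{general_KSVD} collapses to a binary decision between keeping $x_i=0$ and paying $\lambda$ to use a nonzero row. Because there is no gradient to exploit, I would not attempt the SVD-basis expansion used in Proposition \ref{prop:l1}; instead I would compare the two branches directly.

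In the inactive branch $x_i=0$ the product $d^{i}x_i$ vanishes, so the objective equals $\tfrac12\norm{R^k}^2$ and the choice of $d^i$ is irrelevant, which is why one simply retains $d^{i,k}$. In the active branch $x_i\neq 0$ the penalty is the constant $\lambda$, so minimizing \eqref{general_KSVD} reduces to minimizing the reconstruction term $\tfrac12\norm{d^{i}x_i-R^k}^2$ over $\norm{d^i}=1$ and $x_i$. This is exactly the $\lambda=0$ problem recalled just before the statement, whose minimizer is the best rank-one fit of $R^k$, attained at the leading singular triple $d^i=u^1(R^k)$, $x_i=\sigma_1 v_1(R^k)$. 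I would then record the corresponding optimal value, using that removing the top rank-one component subtracts $\sigma_1^2$ from the squared Frobenius energy, so the active-branch optimum equals $\tfrac12\bigl(\norm{R^k}^2-\sigma_1^2\bigr)+\lambda$.

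The final step is the comparison of the two branch values. The active branch is preferable precisely when $\tfrac12\bigl(\norm{R^k}^2-\sigma_1^2\bigr)+\lambda\le\tfrac12\norm{R^k}^2$, i.e. when $\tfrac12\sigma_1^2\ge\lambda$; rewriting the left-hand side as the rank-one approximation error $\tfrac12\norm{\sigma_1 u^1(R^k)v_1(R^k)-R^k}^2$ recasts this inequality as the threshold condition displayed in the proposition, and its failure leaves $(d^{i,k},0)$ as the minimizer. I expect the only delicate point to be the active branch: one must justify that, with $d^i$ constrained to the unit sphere but $x_i$ free, the Eckart--Young best rank-one approximation is feasible and optimal, so that the minimizer can be imported verbatim from the $\lambda=0$ case. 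Once that is in hand, the dichotomy follows from a single scalar comparison.
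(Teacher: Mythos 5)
Your proposal is correct and follows essentially the same route as the paper's own proof: a binary case split on whether the row $x_i$ is null, with the active case reduced to the standard $\lambda=0$ K-SVD rank-one fit and the two branch values compared to obtain the stated threshold. Your version is actually more careful than the paper's terse argument — in particular you make explicit the active-branch optimal value $\tfrac12\bigl(\norm{R^k}_F^2-\sigma_1^2\bigr)+\lambda$ and the scalar criterion $\lambda\le\tfrac12\sigma_1^2$, and you correctly include the $\sigma_1$ scaling in $x_i=\sigma_1 v_1(R^k)$ that the paper's statement and proof omit notationally (consistent with the K-SVD update $(u^1,\sigma_1 v_1)$ in Algorithm 1).
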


\begin{proof}
Obviously, when $ \phi(x) = \norm{x}_0 $ there are only two possible cases: $(i)$ the solution $x_i^{k+1}$ is nonzero and, thus, is the same with the usual K-SVD update $\left(u^{1}(R^k), v_{1}(R^k)\right)$, see \cite{DumIro:17}; $(ii)$ $x_i^{k+1}$ is null when this value guarantees a larger descent on the local objective function, i.e.
$\frac{1}{2}\norm{R^k}^2 \le \frac{1}{2}\norm{u^{i}(R^k) v_{i}(R^k) - R^k}^2 + \lambda.$
\end{proof}

\vspace{-5pt}

\noindent The penalty parameter $\lambda$ represents the only degree of freedom that influences the number of 0-rows in the optimal $X^*$. Equivalently, large values of $\lambda$ yields an increasing number of ignored atoms in the final sparse representations of $Y$. Since the large energy of row $x_i$ might reflect a large importance of atom $d^i$, its elimination is undesirable. Thus, we might consider a truncated $\ell_2$ norm, that promotes sparsity on the low-norm rows in $X$. In this case, the particular penalty $\phi(z) = \ell_{\epsilon}(z):= \min\{|z|,\epsilon\}$ would penalize only the components that are below threshold $\epsilon$. Although iteration \eqref{general_KSVD}, with this form of $\phi$, keeps a simple and explicit form, our experiments did not showed any improvement over $\ell_1$ and $\ell_0$ models.


\begin{figure}[!t]
		\centering
		\includegraphics[trim = 48 650 50 50, clip, width=0.5\textwidth]{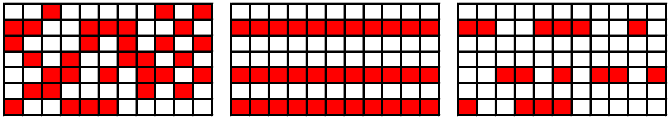}
		\caption{The non-zeros entries (red squares) in the sparse representation matrix $X$ for: the single measurement vector case (left), the multiple measurement vector -- also called the simultaneous -- case (center), and the proposed (right).}
		\label{fig:matrices}
	\end{figure}


\begin{algorithm}[t!]
\DontPrintSemicolon
\SetKwComment{Comment}{}{}
\KwData{train set $Y \in \rset^{m \times N}$,
test set $\Tilde{Y} \in \rset^{m \times \Tilde{N}}$,
$D^1 \in \rset^{m \times n}$,
sparsity $s$,
iterations K,
        }
\KwResult{anomalies $\A$}
\BlankLine

\emph{Training Procedure }\\
Representation: $X^1 = \text{OMP}(Y, D^1, s)$ \\
\For{$k \in \{1,\dots,K \}$}{
Error: $E^k = Y - D^k X^k$ \\
\For{$i \in \{1,\dots,n\}$}{
    Atom error: $R^k = E^k + d^{i,k} x_{i}^{k}$ \\
    SVD rank-1 approximation: $R^k \approx u^1\sigma_1 v_1$ \\
    K-SVD update: $(d^{i,k+1}_\text{SVD}, x_{i,\text{SVD}}^{k+1}) \!=\! (u^1, \sigma_1 v_1)$ \\
    Regularization: apply \eqref{opt_l1} or \eqref{opt_l0}\\ 
    New error: $E^{k} = R^k - d^{i,k+1} x_{i}^{k+1}$ \\
}
Uniform Support: $\I = \{i \mid \norm{x_i}_0 \ne 0\}$
}
\BlankLine
\emph{Anomaly Detection}\\
Representation: $X_\text{test} = \text{OMP}(\Tilde{Y}, D^{K+1}, s)$ \\
\For{$i \in [\Tilde{N}]$}{
    $\J = \{j \mid x_{\text{test},j}^i \ne 0\}$ \\
    \lIf{$\J \nsubseteq \I$}{$\A = \A \cup \{i\}$
    }
}
\caption{Uniform DL Representation for AD}
\label{alg:supp-ksvd}
\end{algorithm}
%
\begin{table*}[t!]

\tabcolsep 5pt

\caption{Maximum AD accuracy, standard deviation in parenthesis and running times for real datasets.}
\label{tab:realresults}
\small
\begin{center}
\bt{l l || c c | c c | 
c c | c c | c c | c c}
Dataset&($m$,$N$,outliers) & \multicolumn{2}{c|}{DL-$\ell_0 (\sigma)$} & \multicolumn{2}{c|}{DL-$\ell_1 (\sigma)$} & 
\multicolumn{2}{c|}{OC-SVM} & \multicolumn{2}{c|}{LOF} & \multicolumn{2}{c|}{IForest} \\
\hline
satellite&(36, 6435, 2036) & \textbf{0.8059}\ (0.059) & 0.57s & 0.8020\ (0.059) & 0.77s
& 0.6391 & 0.36s  & 0.5677 & 0.2s  & 0.7062 & 0.14s  \\
shuttle&(9, 49097, 3511) & 0.8155\ (0.109) & 1.16s & 0.9262\ (0.107) & 1.18s
& 0.6322 & 2.40s  & 0.5269 & 0.1s  & \textbf{0.9771} & 0.21s  \\
pendigits&(16, 6870, 156) & 0.7679\ (0.110) & 0.19s & \textbf{0.8822}\ (0.108) & 0.29s
& 0.7748 & 0.03s  & 0.5895 & 0.01s  & 0.8612 & 0.11s  \\
speech&(400, 3686, 61) & 0.5510\ (0.008) & 2.96s &0.5485\ (0.022) & 6.89s
& \textbf{0.5917} & 0.03s  & 0.5 & 0.02s  & 0.5289 & 0.2s  \\
mnist&(100, 7603, 700) & 0.5882\ (0.015) & 7.31s & \textbf{0.5917}\ (0.013)& 27.0s
& 0.5576 & 0.9s  & 0.5736 & 0.05s  & 0.5255 & 0.2s  \\
\et
\end{center}
\end{table*}

\subsection{The proposed algorithm}

The proposed procedure is given in Algorithm 1 with separate: (\textit{Training Procedure}) and ({\it Anomaly Detection}) sections.

In the training phase, we assume that an initial dictionary with $n$ atoms is available (possibly, a random dictionary) and that the dataset $Y$ was already split into the training and test sets of dimension $N$ and $\hat{N}$, respectively. The training set does not contain any anomalies. The first step of the algorithm is to construct the sparse representations via the
greedy Orthogonal Matching Pursuit (OMP)~\cite{PRK93omp} algorithm
that iteratively seeks a separate $s$-sparse representation for each signal in $Y$.
Based on the sparsity pattern computed by OMP, each one of the following iterations will update all the atoms of the dictionary by the standard K-SVD approach (an SVD step on the residual matrix $R^k$ but only on the columns that use the current atom) modified to take into account one of the regularizers we described in Section 3. The effect of the regularizer is to apply a joint sparsity constraint on the rows of $X^k$ while preserving the sparsity pattern 
of the remaining rows
(see Figure \ref{fig:matrices}) originally decided by the OMP algorithm. We highlight that OMP runs a single time, at the start of Algorithm 1 and not with every iteration. The support set $\mathcal{I}$ contains the indices of the rows from $X$ which are non-zero.

Once the training is complete and we have the dictionary $D$ and the set $\mathcal{I}$, in the AD step we use the OMP algorithm to compute the sparse representations $X_\text{test}$ on the test dataset $\hat{Y}$ and the we classify individually the data points as anomalies when in their sparse support there is a single atom from $D$ which is not in the set $\mathcal{I}$.
Note that standard DL algorithms,
such as K-SVD,
produce in general a uniformly distributed support across the representations $X$ which make them unfeasible for AD:
if $\I = [n]$ then
$\J$ would always be included in $\I$ at step 16
and thus no anomalies would be detected.

\section{Numerical experiments\footnotemark}
\label{sec:numexp}

\footnotetext{\noindent Python code at https://github.com/pirofti/AD-USR-DL}

In this section, we provide synthetic and real-world numerical experiments to evaluate the performance of the proposed algorithm. We also compare against some of the state-of-the-art methods from the AD literature,
but not with standard DL algorithms as they are not fit for AD.
Throughout our experiments we use $K=20$ and $s=0.2\sqrt{m}$ and use $90\%$ of the available inliers for training
(the outliers are not included).
We run on an AMD Ryzen Threadripper PRO 3955WX with 512GB of memory using Python 3.9.7 and Scikit-learn 1.0.

\pdffig[\columnwidth]
    {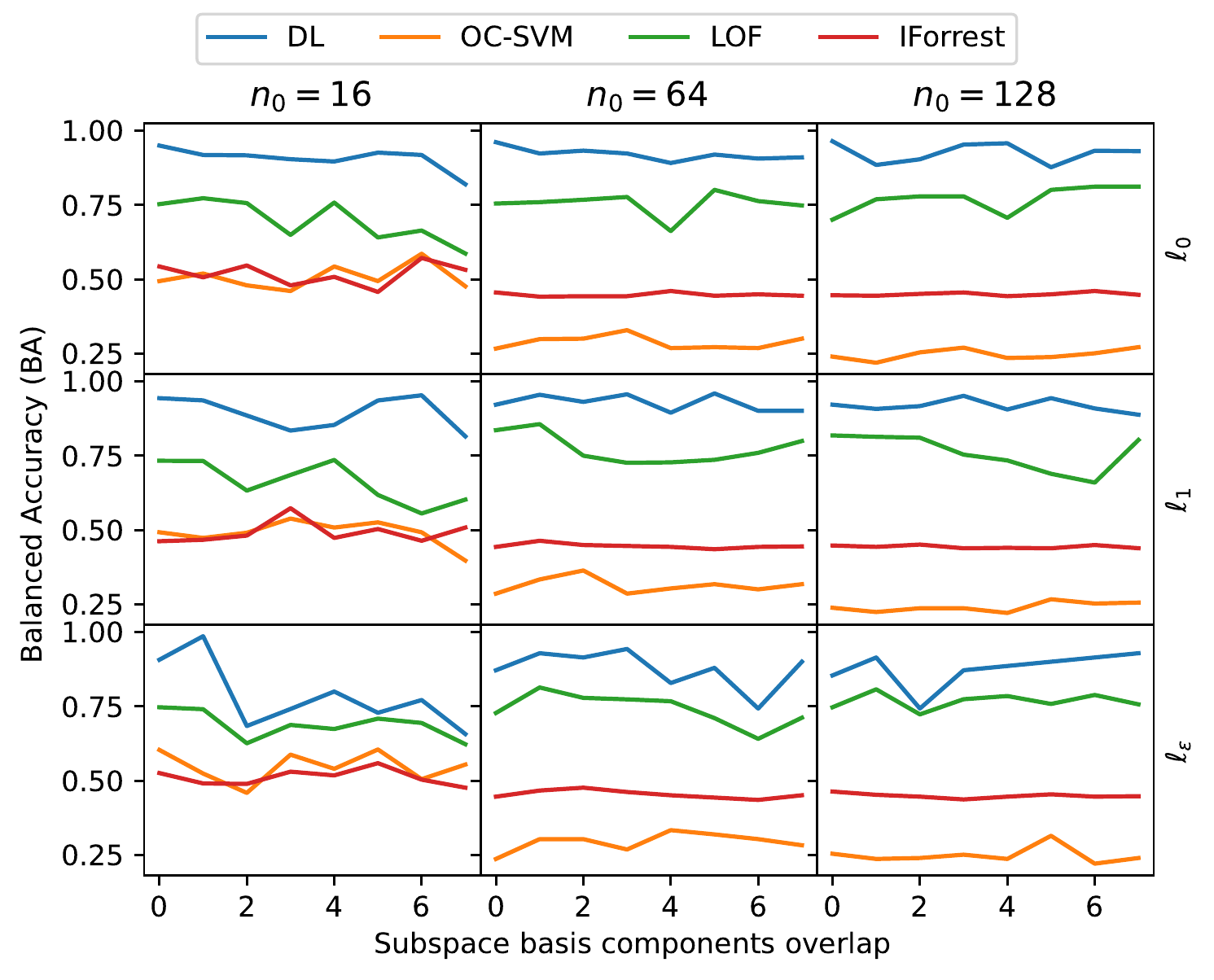}
    {Average accuracy of AD for synthetic data.}
    {fig:synth_accuracy}

Our first experiment is based on synthetic data. 
We generate two dictionaries of sizes $n_n$ for inliers and $n_0$ for outliers from which we generate $N+\Tilde{N}$ signals with $m=64$.
Each signal is produced by randomly choosing $s$ atoms from
one of the dictionaries
that produce a linear combination together with
their associated coefficients drawn from the normal distribution.
To harden the problem we also create an overlap between the atoms of the two dictionaries.
We split the resulting dataset into the training set $Y$,
and the testing data set
$\Tilde{Y}$ built from the outliers together with the remaining inliers.
In the paper the outliers represent $10\%$ of the total amount of testing signals.
We have tested with similar results anomaly planting from 1 to 20 percent.
DL starts with an $n=128$ normalized randomly generated dictionary ($n \gg \max\{n_n, n_0\}$).

Figure~\ref{fig:synth_accuracy} presents 9 rounds of experiments where we vary from $n_0=16$ to $n_0=128$ on the columns.
The rows represent experiments with the regularizations from Section 3.
Each plot presents results with different degrees of overlap between the original generating dictionaries and their effect on the balanced accuracy (BA).
We choose BA because it averages the sensitivity and specificity
of our models thus giving the reader a sense of both false positives and false negatives (the undetected anomalies).

For our second experiment,
we have chosen 5 datasets that belong to the publicly available Outlier Detection DataSets (ODDS)\footnote{http://odds.cs.stonybrook.edu/}.
We chose these datasets to span a wide range of available features and number of outliers. In all cases,
in the testing phase we use $10\%$ of the inlier data and all the outlier data.
We present the results in Table \ref{tab:realresults},
including the standard deviation $\sigma$ for our methods shown in parenthesis.
The proposed method,
with both regularizers $\ell_0$ and $\ell_1$,
performs best in 3 out of 5 cases and stays competitive in the other two
but has the largest running time among the methods we consider.
The parameter $\lambda$ of the proposed method is optimized by using a grid search.
The competing methods,
One Class - Support Vector Machine (OC-SVM)~\cite{ocsvm},
Local Outlier Factor (LOF)~\cite{lof},
and Isolation Forest~\cite{iforest},
were optimized through an extensive grid-search across multiple kernels, metrics and hyper-parameters
(OC-SVM did not always convergence on Shuttle and MNIST).

\section{Conclusions}


In this paper we propose a new dictionary learning based anomaly detection scheme
with uniform sparse representations.
Our algorithm starts with an initial sparse support
and proceeds only with regularized rank-1 update iterations. Avoiding sparse representation on each  dictionary learning iteration allows us to guarantee a descent on the objective function regularized by $\ell_{2,0}$,
$\ell_{2,1}$, 
and
$\ell_{2,\epsilon}$ penalties.
We also provide numerical experiments that confirm our method.

In the future we plan on providing an in-depth analysis of the parameters effect on the anomaly detection task
in order to establish a more rigorous detection scheme. 




\clearpage	
\bibliographystyle{IEEEbib}
\bibliography{factorization_ad}
	
\end{document}